\newcommand{\OmegaSn}{\Omega^{(S,n)}}
\newcommand{\Omegan}{\Omega^{(n)}}
\newcommand{\Omegam}{\Omega^{(m)}}
\newcommand{\Omegak}{\Omega^{(k)}}
\newcommand{\Qn}{Q^{(n)}}
\newcommand{\Qm}{Q^{(m)}}
\newcommand{\Ln}{L^{(n)}}
\newcommand{\Lm}{L^{(m)}}
\newcommand{\indicator}{\mathbbm 1}
\newcommand{\true}{\emph{true}}
\newcommand{\false}{\emph{false}}
\begin{document}
%
\title{Inference, Learning, and Population Size: Projectivity for SRL Models}
\author{Manfred Jaeger\\
Aalborg University\\
jaeger@cs.aau.dk
\And
Oliver Schulte\\
Simon Fraser University\\
oschulte@cs.sfu.ca
}
\maketitle
\begin{abstract}
\begin{quote} A subtle difference between propositional and relational data is that in many relational models, marginal probabilities depend on the population or domain size. This paper connects the dependence on population size to the classic notion of projectivity from statistical theory: Projectivity implies that relational predictions are robust with respect to changes in domain size. We discuss projectivity for a number of common SRL systems, and identify syntactic fragments that are guaranteed to yield projective models. 
The syntactic conditions are restrictive, which suggests that projectivity is difficult to achieve in SRL, and care must be taken when working with different domain sizes.
\end{quote}
\end{abstract}

\section{Introduction} 
In propositional or i.i.d. data, marginal probabilities are independent of the number of entities for which data are available. For example, if we start with a test set of 100 data points with fully observed attributes, and we are told that another 1,000 test cases are available, this information does not change a model's predictions on the original 100, because all data points are independent. Several authors have observed that for relational data, this is not true \cite{lauritzen2017random,shalizi2013consistency,poole2014population,jain2010adaptive}: additional entities, or nodes in a network, are potentially related, and therefore their mere presence can be inferentially relevant.

The dependence on domain size raises several difficulties. (1) Counter-intuitiveness: if we are interested in making predictions about the 100 members of a Facebook group, learning that Facebook has gained another 1,000 users in the last hour should not change our predictions about the group of interest, unless we have specific information about the new users. (2) Complexity of Inference: We typically learn from fairly large networks and apply the results of learning to draw inferences about relatively small sub-networks. If the presence of other nodes is relevant to the marginal probabilities associated with the sub-network of interest, inference becomes computationally challenging, as it involves a large summation over the possible states of many nodes outside the sub-network. 
(3) Complexity of Learning: The observed network for learning is typically embedded in a much larger network. If marginal predictions of the observed sub-network depend on the size of the embedding network, learning should not treat the sub-network as a closed world to be analyzed in isolation. 

Our aim in this paper is to connect dependence on domain size with the classic concept of {\em projectivity} from statistical theory \cite{shalizi2013consistency}. An SRL model typically defines a family of probability distributions over relational structures, one for each domain size $n$. Projectivity requires that  the distribution for size $m$ agrees with the marginal distribution obtained from any large domain $n > m$. This ensures that the distributions for different domain sizes are consistent with each other, and that predictions about a sub-network do not depend on the presence of outside nodes.

Most (if not all) SRL modeling frameworks support the construction of non-projective models, and in many cases, this is an important source of model expressivity. However, taking into consideration the significant benefits in terms of computational complexity and robustness of inference and learning results, it is worthwhile to explore projective fragments of SRL languages. After establishing the formal framework for defining and analyzing projectivity in an SRL context (Sections~\ref{sec:background},\ref{sec:projectivity}), we identify syntactic constraints for three prototypical SRL frameworks that ensure projectivity (Section~\ref{sec:fragments}). We finally take a closer look at how projectivity can justify maximum likelihood learning from observed sub-networks (Section~\ref{sec:learning}).  


\paragraph{Related Work.} Several SRL works have addressed changing domain sizes. \citeauthor{poole2014population} discuss the impact of domain size and inference~\citeyear{poole2014population}.  
Convergence of inference~\cite{jaeger1998convergence} and learning~\cite{Xiang2011} in the domain size limit has been investigated.~\citeauthor{kuzelka2018relational} discuss the validity of learning from a sub-network embedded in a larger network. They focus on models that satisfy a given set of marginal probabilities, not projectivity.  Statisticians have examined the projectivity of network models from the exponential family \cite{shalizi2013consistency,lauritzen2017random}, but have not considered other SRL models.

\section{Background}
\label{sec:background}

The following definitions provide a framework for talking about SRL systems in general.

A relational \emph{signature} $S$\ contains relation symbols of varying arities. We write 
$r/k$ for a relation $r$ of arity $k\geq 0$.  
A \emph{possible world} $\omega$\ (for $S$)  is given by a finite domain $D=\{d_1,\ldots,d_n\}$\ on 
which extensions of the relations in $S$\ are defined. We then refer to $n$ as the \emph{size} of 
$\omega$. Moreover, we generally assume that $D=\{0,\ldots,n-1\}=:[n]$. Then $\OmegaSn$ denotes the set of 
all possible worlds for the signature $S$ over the domain $[n]$. 
This can be abbreviated as  $\Omegan$ when the signature is understood from the context. 

$\omega\in \Omegan$ can also be identified with a truth value assignment 
\begin{displaymath}
  \omega: r(\boldi) \mapsto \{\true,\false\}
\end{displaymath}
for all ground atoms $r(\boldi)$, where $r/k\in S$ and $\boldi=(i_1,\ldots,i_k)\in [n]^k$.

We denote with $\Delta\Omegan$ the set of all probability distributions on $\Omegan$. 
A \emph{random relational structure model (rrsm)} is a family of distributions 
$\{\Qn \in\Delta\Omegan \mid n\in\Nset\}$. The term RRSM, which was first
introduced in~\cite{JaegerMI02}, is intended to emphasize that this is a multi-relational generalization of
the classical concept of random graph models (or \emph{random networks}~\cite{lauritzen2017random}).  

For this paper we limit ourselves to the consideration of \emph{exchangeable} RRSMs: a single distribution
$\Qn$ is exchangeable, if $\Qn(\omega)=\Qn(\omega')$ whenever $\omega$ and $\omega'$ are isomorphic. An 
RRSM is exchangeable, when all $\Qn$ ($n\in\Nset$) are exchangeable. 
Exchangeability of $\Qn$ implies that 
$\Qn(r(\boldi)=\true)=\Qn(r(\boldj)=\true)$ when $\boldj = \pi(\boldi)$ for some permutation $\pi$ of $[n]$.

RRSMs defined by typical SRL frameworks are necessarily exchangeable when the model specification does not
make use of any constants referring to specific domain elements, and is not conditioned on a pre-defined
structure on the domain. Examples of RRSMs that do not satisfy this condition are temporal models such as
hidden Markov models, where the model is conditioned on a linear ordering $0<1<\cdots< n-1$ of the domain, 
and the marginal distributions at different timepoints $t\neq t'$ are ususally different.

\subsection{SRL Systems}

In this section we briefly review three different SRL frameworks  that can be used to define
RRSMs. We do not give complete definitions of syntax and semantics for these frameworks here, but
only describe their main characteristics by way of examples. The selected systems are representatives
for three distinct modeling paradigms.

\subsubsection{Relational Bayesian Networks}

Relational Bayesian networks (RBNs)~\cite{Jaeger97UAI} are a representative of frameworks that
are closely linked to directed graphical models. Other approaches in this group include 
probabilistic relational models~\cite{friedman1999learning}, Bayesian logic programs~\cite{kersting2000interpreting}
and relational logistic regression~\cite{kazemi2014relational}. 

An RBN associates with each relation $r\in S$ a functional expression that defines the probabilities
of ground atoms $r(\boldi)$ conditional on other ground atoms. Functional expressions can be formed 
by combining and nesting basic constructs, which support Boolean conditions, mixtures of distributions, 
and aggregation of dependencies using the central tool of \emph{combination functions}. 

For example, the two formulas 

\begin{displaymath}
  \begin{array}{lll}
    \emph{red}(X) & \leftarrow  & 0.3 \\
    \emph{black}(X) & \leftarrow  & {\tt if}\ \emph{red}(X):0\ {\tt else}:\ 0.5
  \end{array}
\end{displaymath}
define Boolean attributes \emph{red} and \emph{black}, such that no element can be both \emph{red}
and \emph{black}. One can then condition a binary \emph{edge} relation on the colors of the nodes:
\begin{displaymath}
  \begin{array}{lll}
    \emph{edge}(X,Y) & \leftarrow &  {\tt if}\  \emph{red}(X)\, \&\, \emph{red}(Y): 0.7 \\
    & & {\tt else\ if}\  \emph{black}(X)\, \&\, \emph{black}(Y): 0.4 \\
    & & {\tt else}:\ 0.05 \\
  \end{array}
\end{displaymath}

Together, the three formulas for \emph{red}, \emph{black} and \emph{edge} represent a simple 
stochastic block model~\cite{snijders1997estimation}. 
The main expressive power of RBNs derives from \emph{combination functions} that 
can condition a ground atom on properties of other domain entities:
\begin{displaymath}
  \begin{array}{lll}
   t(X)  & \leftarrow & \emph{noisy-or}\{ {\tt if}\ \emph{edge}(X,Y)\, \&\, \emph{red}(Y):0.2  \mid Y  \}
  \end{array}
\end{displaymath}
This formula makes the probability of the attribute $t$ for $X$ dependent on the number of \emph{red}
entities $Y$ that $X$ is connected to via an \emph{edge}. Each such entity causes $t(X)$ to be
true with probability 0.2, and the different causes $Y$ are modeled as independent via 
the \emph{noisy-or} combination function.

An RBN specification defines an RRSM, provided that for each $n$ the formulas induce an 
acyclic dependency structure on the atoms of $\Omega^{(S,n)}$. The distribution $\Qn$ can then be 
represented by a Bayesian network whose nodes are the ground atoms $r(\boldi)$, and where 
$r'(\boldi')$ is a parent of $r(\boldi)$, if the truth value of $r'(\boldi')$ is needed to 
evaluate the probability formula for $r(\boldi)$. 

\subsubsection{Markov Logic Networks}

Markov Logic networks (MLNs)~\cite{RicDom06} are the multi-relational generalization of exponential 
random graph models~\cite{frank1986markov,wasserman1996logit}. An MLN consists of a set
of weighted formulas. As an example, let $S=\{\emph{red}/1,\emph{edge}/2\}$, and define
the following two weighted formulas:
\begin{displaymath}
  \begin{array}{lr}
    \phi_1(X,Y) :\equiv \emph{edge}(X,Y)\wedge \emph{red}(X) \wedge \emph{red}(Y) & 1.2 \\
    \phi_2(X,Y) :\equiv \emph{edge}(X,Y)\wedge \emph{red}(X) \wedge \neg \emph{red}(Y) & -0.2 \\
  \end{array}
\end{displaymath}
These two weighted formulas represent a homophily model for the \emph{red} attribute 
in a graph. Each pair $(i,j)\in [n]^2$ contributes a weight of 1.2 (-0.2) to a possible 
world $\omega\in\OmegaSn$ if $\phi_1(i,j)$ ($\phi_2(i,j)$) is true in $\omega$. The probability
$\Qn(\omega)$ then is the normalized sum of all weights contributed by all the weighted formulas, 
and all possible substitutions of domain elements for the variables in the formulas.

$\Qn$ can be represented by a Markov network whose nodes are the ground atoms $r(\boldi)$, and
where two atoms $r(\boldi),r'(\boldi')$ are connected by an edge if these two atoms 
appear jointly in a grounding of one of the weighted formulas.

\subsubsection{ProbLog}

ProbLog~\cite{DeRKimToi2007,kimmig2011implementation} is a representative of RRSMs that 
are closely linked to logic programming. Other frameworks in this class are Prism~\cite{Sato95}, and
independent choice logic~\cite{Poole08}. A ProbLog model consists of 
a set of  \emph{labeled facts}, which are atoms with a probability label attached:
\begin{equation}
\label{eq:problog1}
    0.8::\ \emph{red}(X)
\end{equation}
together with \emph{background knowledge} consisting of (non-probabilistic) definite clauses:
\begin{equation}
\label{eq:problog2}
    \emph{edge}(X,Y) :-\ \emph{red}(X),\emph{red}(Y).
\end{equation}
We note that~\cite{kimmig2011implementation} emphasize the use of ground atoms in the labeled 
facts. Since our interest is with  generic, domain-independent RRSMs, we restrict attention
to ProbLog models that do not contain domain constants.

The ProbLog model consisting of (\ref{eq:problog1}) and (\ref{eq:problog2}) defines the 
probability of $\omega\in\OmegaSn$ as 0 if $\omega$ does not satisfy the property that
two elements are connected by an edge if and only if they both are red (i.e., 
$\omega$ must be a minimal model of  (\ref{eq:problog2})). If $\omega$ satisfies 
 (\ref{eq:problog2}), then its probability is $0.8^{|\emph{red}(\omega)|}0.2^{n-|\emph{red}(\omega)|} $, 
where $\emph{red}(\omega)$ denotes the set of elements  $i\in [n]$ for which 
$\emph{red}(i)$\ is true in $\omega$.   

In order to also make rules such as (\ref{eq:problog2}) probabilistic, one can introduce
latent relations that represent whether rules in the background knowledge become applicable.
In our example, we can add a new relation \emph{rule}/2,  an additional labeled fact
\begin{equation}
\label{eq:problog3}
    0.5::\ \emph{rule}(X,Y),
\end{equation} 
and modify the background knowledge to 
\begin{equation}
\label{eq:problog4}
    \emph{edge}(X,Y) :-\ \emph{red}(X),\emph{red}(Y),\emph{rule}(X,Y).
\end{equation}
Now the resulting ProbLog model is a (partial) stochastic block model, where with probability
0.5 two red nodes are connected by an edge (and still no other than pairs of red nodes could 
be connected; further labeled facts and rules can be added to also model connections between
non-red nodes).

\section{Projectivity}
\label{sec:projectivity}

For
$\Qn\in\Delta\Omegan$  and a subset $\{i_0,\ldots,i_{m-1}\}\subseteq [n]$ we 
denote with  $\Qn\downarrow  \{i_0,\ldots,i_{m-1}\}$ the marginal distribution on the sub-structures
induced by $\{i_0,\ldots,i_{m-1}\}$, or, equivalently, the marginal distribution on the ground atoms
$r(\boldi)$ with $\boldi\in  \{i_0,\ldots,i_{m-1}\}^k$. When $\Qn$ is exchangeable, then the induced
distribution is independent of the actual choice of the elements $i_h$, and we may assume that
$\{i_0,\ldots,i_{m-1}\}=\{0,\ldots,m-1\}$. We therefore can limit attention to the 
marginalizations
\begin{equation}
  \label{eq:marginalizeQn}
  \Qn\downarrow [m] \in \Delta \Omegam.
\end{equation}

Based on \cite{shalizi2013consistency} we define several different versions of ``projectivity'' for 
probabilistic relational models. Our definitions are more restricted than the one given by Shalizi 
and Rinaldo~(\citeyear{shalizi2013consistency}) in that 
we specifically tailor the definitions to  random relational structure models.
For convenience, we also include the condition of exchangeability in the following 
definitions, even though exchangeability and projectivity are not necessarily linked.

\begin{definition}
\label{def:proj1}
  A RRSM is \emph{projective}, if 
  \begin{itemize}
  \item every $\Qn$ is exchangeable
  \item for every $n$, every $m<n$:
    $\Qn\downarrow [m]=Q^{(m)}$.
  \end{itemize}
\end{definition}

\begin{example}
  A simple classical example for a projective RRSM is the Erd\"os-R\'{e}nyi random graph model with
edge probability $p$, where
$S=\{e/2\}$, and for $\omega\in\Omegan$: $\Qn(\omega)=p^{|e(\omega)|}(1-p)^{n^2-|e(\omega)|}$, 
where $ |e(\omega)| $ denotes the number of edges in $\omega$.

An example of a very different nature is the RRSM where for every $n$: 
\begin{equation}
  \label{eq:clique_empty}
  \Qn= \frac{1}{2}\indicator_{K_n} +  \frac{1}{2}\indicator_{E_n}
\end{equation}
 where $K_n$\ is the complete and $E_n$\ the empty graph over $n$, and 
$\indicator_{\omega}$ denotes the distribution that assigns probability one to $\omega$. 

Not projective are sparse 
random graph models, where the edge probability is a decreasing function of the domain size $n$, e.g. 
$\Qn(e(i,j))=\theta/n$\ for some parameter $\theta$. 
\end{example}

Definition~\ref{def:proj1} relates to a single RRSM. In the context of learning, we are initially only 
given a space of candidate models. The models in this space are typically characterized by a structural 
component, and a set of numerical parameters. In the following, we assume that the learning problem only
consists of optimizing over a fixed set of numerical parameters, i.e., structure learning is outside 
the scope of our considerations. 

Concretely, let $\Theta\subseteq \Rset^k$ be a parameter space, so that each $\boldtheta\in\Theta$ defines
a distribution $\Qn_{\boldtheta}\in\Delta\Omegan$. Then  $\{\Qn_{\boldtheta}\mid \boldtheta\in\Theta, n\in\Nset\}$
is a  \emph{parametric family of RRSMs}.

This leads to the following  definition of projectivity, which is essentially the one
of~\cite{shalizi2013consistency}.
\begin{definition}
\label{def:projective}
  A parametric family of RRSMs is \emph{projective}, if  $\{\Qn_{\boldtheta}\mid n\in\Nset\}$ is
projective for every $\boldtheta\in\Theta$.
\end{definition}

We can weaken this as follows:
\begin{definition}
   A parametric family of rrms is \emph{structurally projective} if 
   \begin{itemize}
   \item  every $\Qn_{\boldtheta}$ is exchangeable
   \item for every $n,\boldtheta$, and $m<n$:  there 
exists $\boldtheta'\in \Theta$ such that  $\Qn_{\boldtheta}\downarrow [m] =Q^{(m)}_{\boldtheta'}$.
   \end{itemize}
\end{definition}

This concept of structural projectivity corresponds to  \emph{weak consistency} in the sense of 
\cite{lauritzen2017random} restricted to exchangeable distributions.

An example for structurally projective models are sparse random graph models with edge probabilities 
$\theta/n$: here we then have $\Qn_{\theta}\downarrow I=Q^{(m)}_{\theta'}$ with $\theta'=\theta m/n$

\begin{example}
\label{ex:mlnnotsp}
  MLNs are not structurally projective: consider the MLN
  \begin{displaymath}
    \begin{array}{ll}
      a(X),e(X,Y) & w
    \end{array}
  \end{displaymath}
defining probability distributions $\Qn_w$ depending on the weight parameter $w$.
Consider the  conditional probability 
\begin{equation}
\label{eq:qna0}
  q(n,w):=\Qn_w(a(0)| \neg e(0,0),\neg e(0,1),\neg e(1,0),\neg e(1,1)).
\end{equation}
If $w>0$, then $q(n,w)$ is increasing in $n$ (and in $w$). However, for 
$n=2$, we have $q(n,w)=1/2$, regardless of the value of $w$. Thus, for a suitable combination
of $n,w$ where $q(n,w)>1/2$, we cannot find a value $w'$ such that
$\Qn_q\downarrow [2] = Q^{(2)}_{w'}$.
\end{example}

The lack of structural projectivity sets some limits to the approach of defining the
weights in an MLN as functions of the domain size $n$~\cite{jain2010adaptive} in 
order to compensate for the domain dependence of the model. 

\begin{example}
  RBNs are not structurally projective: similarly to Example~\ref{ex:mlnnotsp}, we can construct
a counterexample as follows. Consider the RBN
\begin{displaymath}
  \begin{array}{lll}
   \emph{edge}(X,Y) & \leftarrow & 0.5 \\
   a(X)  & \leftarrow & \emph{noisy-or}\{ {\tt if}\ \emph{edge}(X,Y):\theta  \mid Y  \}
  \end{array}
\end{displaymath}
defining probability distributions $\Qn_{\theta}$ depending on the probability parameter $\theta$.
Defining $q(n,\theta)$ as in (\ref{eq:qna0}), we obtain that $q(2,\theta)=0$, regardless of $\theta$,
but $q(n,\theta)>0$ for $n\geq 3$ and $\theta>0$. 
\end{example}

\section{Projective Fragments of SRL Systems}
\label{sec:fragments}

In this section we identify for the three representative SRL frameworks introduced in 
Section~\ref{sec:background}, restrictions on model structure that give rise to projective models. 

For the propositions stated in this section we give short proof sketches. Full proofs would
need to refer to complete formal specifications of syntax and semantics of the various 
frameworks, which we have omitted in Section~\ref{sec:background}. Even the proof sketches we
provide may appeal to some facts about the individual frameworks that were not explicitly
spelled out in Section~\ref{sec:background}.


\begin{proposition}
  An RBN defines a projective RRSM if it does not contain any combination functions.
\end{proposition}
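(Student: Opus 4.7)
The plan is to exploit the fact that an RBN without combination functions induces, over every domain $[n]$, a ground Bayesian network in which the parent set of an atom $r(\boldi)$ is confined to atoms built only from the domain elements appearing in $\boldi$. Once this structural fact is in place, the projectivity claim reduces to the standard Bayesian-network identity that summing out a set of nodes with no children among the remaining nodes returns the marginal distribution on the remaining nodes.

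The first step would be to make the parent-locality claim precise. Combination functions are the only construct in the RBN language that bind a variable not appearing in the head, so without them every variable occurring in the probability formula for $r(X_1,\dots,X_k)$ must already occur in the head. Therefore every ground atom that is referenced when evaluating the formula for $r(\boldi)$ with $\boldi=(i_1,\dots,i_k)$ is of the form $r'(\boldj)$ with $\boldj\in\{i_1,\dots,i_k\}^{k'}$. I would make this argument by a straightforward structural induction on the nesting of the ${\tt if}$-then-else and mixture constructs allowed by the combination-function-free syntax.

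With this in hand, consider any $n>m$, fix $\boldtheta$, and let $B^{(n)}$ be the ground Bayesian network that represents $\Qn$. Its node set partitions into $A_m$, the atoms whose arguments lie entirely in $[m]$, and $A_{n\setminus m}$, the rest. By the parent-locality fact, atoms in $A_m$ only have parents in $A_m$, so $B^{(n)}$ restricted to $A_m$ is itself a Bayesian network, and by construction its conditional probability tables are identical to those of the ground network $B^{(m)}$ representing $Q^{(m)}$, since both are obtained by instantiating the same functional expressions with the same variable bindings drawn from $[m]$. Hence summing $\Qn$ over the variables $A_{n\setminus m}$ factorizes, with each conditional factor associated with an atom in $A_{n\setminus m}$ summing to one (using topological order on $B^{(n)}$, which respects the $A_m$/$A_{n\setminus m}$ split), leaving exactly the factorization of $Q^{(m)}$. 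Exchangeability is immediate from the assumption that the RBN contains no domain constants.

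The main obstacle is really just the parent-locality lemma: everything else is the standard elimination argument for a closed ancestral subset of a Bayesian network. Once one commits to a precise grammar for combination-function-free RBN formulas, the induction is routine, but it is the step that genuinely uses the syntactic restriction, and it is where an analogous statement would fail for MLNs or for RBNs with aggregators such as \emph{noisy-or} over an unbound variable, as the later counterexample in the paper illustrates.
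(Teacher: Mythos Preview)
Your proposal is correct and follows essentially the same approach as the paper's proof sketch: both hinge on the parent-locality observation that, without combination functions, the parents of $r(\boldi)$ contain only constants from $\boldi$, so the ground atoms over $[m]$ form an upward-closed (ancestral) sub-network of $B^{(n)}$ with the same structure and parameters as $B^{(m)}$. You supply more detail than the paper (the structural induction on the syntax, the explicit marginalization via topological order, and the remark on exchangeability), but the argument is the same.
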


\begin{proof}[Sketch]
Consider the Bayesian network representation $B^{(n)}$ 
of $\Qn$, and a parent child pair $ r'(\boldi') \rightarrow r(\boldi)$ in $B^{(n)}$.
If the probability for the relation $r$ is defined without the use of combination functions, 
then all constants appearing in $\boldi'$ must also be contained in $\boldi$. This implies 
that  the sub-network for the ground atoms over $[m]$ ($m\leq n$) is 
an upward-closed sub-graph of $B^{(n)}$ whose structure and probability parameters do not
depend on $n$. This means that the marginal distributions on the ground atoms over  $[m]$
does not depend on $n$.
\end{proof}

Even though combination functions are the main source for the expressive power of RBNs, one
can still encode some relevant models with the combination function free fragment. 
One example are the stochastic
block models as shown in Section~\ref{sec:background}. Another example are temporal models 
such as dynamic Bayesian networks or hidden Markov models. However, as mentioned in Section~\ref{sec:background}, these models are not exchangeable, and therefore outside the scope of this paper.


\begin{proposition}
\label{prop:mlnfrag}
An MLN  defines a projective RRSM  if its formulas $\phi_i$ satisfy the property that
any two atoms appearing in $\phi_i$ contain exactly the same variables. 
\end{proposition}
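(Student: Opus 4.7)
The plan is to exploit the exponential-family form of MLN distributions so that the syntactic condition forces the feature counts to decompose cleanly across a partition of the ground atoms. First I would write $\Qn(\omega) = Z_n^{-1}\exp\bigl(\sum_i w_i\, n_i(\omega)\bigr)$, where $n_i(\omega)$ is the number of satisfying groundings of $\phi_i$ in $\omega$; exchangeability of each $\Qn$ is immediate from the absence of constants in the $\phi_i$. For the projectivity condition, I fix $m<n$ and partition the ground atoms of $\Omega^{(S,n)}$ into $A_{[m]}$, those whose arguments all lie in $[m]$, and $A_{\rm ext}$, those with at least one argument outside $[m]$.

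The core step is the following observation. Under the syntactic restriction, each $\phi_i$ has a common variable set $\{X_1,\ldots,X_k\}$ shared by all of its atoms, and every atom must use every variable in that set at least once. Consequently, a grounding $\phi_i[\bar{X}/\bar{d}]$ with $\bar{d}\in[n]^k$ produces only ground atoms whose arguments are drawn from (and collectively use all entries of) $\{d_1,\ldots,d_k\}$. I would then split the groundings into \emph{interior} ones, with $\bar{d}\in[m]^k$, which reference only atoms in $A_{[m]}$, and \emph{exterior} ones, with at least one $d_j\notin[m]$, which reference only atoms in $A_{\rm ext}$. This yields a decomposition $\sum_i w_i n_i(\omega) = W_{\rm int}(\omega_{[m]}) + W_{\rm ext}(\omega_{\rm ext})$, where $\omega_{[m]}$ and $\omega_{\rm ext}$ are the restrictions of $\omega$ to $A_{[m]}$ and $A_{\rm ext}$, and hence a product form $\Qn(\omega)\propto\exp(W_{\rm int}(\omega_{[m]}))\,\exp(W_{\rm ext}(\omega_{\rm ext}))$.

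Marginalizing out $\omega_{\rm ext}$ then gives $\Qn\downarrow[m](\sigma)\propto\exp(W_{\rm int}(\sigma))$ for every $\sigma\in\Omegam$, with a normalizing constant independent of $\sigma$. But $W_{\rm int}$ is by construction the weight function one obtains by running the same MLN on the domain $[m]$, so after normalization the marginal coincides with $Q^{(m)}$, which is precisely projectivity. The main obstacle --- or rather the step where the syntactic hypothesis really does work --- is securing this clean separation between interior and exterior groundings: if some atom in $\phi_i$ used a proper subset of the formula's variables, an exterior grounding could simultaneously reference an atom in $A_{[m]}$ (via the variables it does contain, instantiated inside $[m]$) and atoms in $A_{\rm ext}$, coupling $W_{\rm int}$ to $W_{\rm ext}$ and destroying the product form. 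The ``same variables in every atom'' condition is exactly what rules this out.
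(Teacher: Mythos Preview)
Your argument is correct and is essentially the paper's own proof in exponential-family language: the paper observes that under the syntactic condition the ground Markov network $M^{(n)}$ splits into disconnected sub-networks indexed by the set of domain elements appearing in the atoms, which is precisely the factorisation $\Qn(\omega)\propto\exp(W_{\rm int}(\omega_{[m]}))\exp(W_{\rm ext}(\omega_{\rm ext}))$ you derive. The only cosmetic difference is that the paper uses the finer decomposition into components (one per argument set), while you use the coarser two-block split into $A_{[m]}$ and $A_{\rm ext}$; either granularity suffices, and the crucial step---that an ``exterior'' grounding cannot touch any atom in $A_{[m]}$ because every atom carries the full variable set---is identical.
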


\begin{proof}[Sketch]
If the condition of the proposition holds, then 
the Markov network representation $M^{(n)}$ of $\Qn$  decomposes  
into a system of disconnected sub-networks, where each sub-network contains ground
atoms over a fixed set of domain elements, with a cardinality at most equal to the maximal 
arity of any $r\in S$.  The structure and parameterization of sub-networks containing
only the atoms for domain elements from $[m]$  ($m\leq n$)  are the same for all $n$, and they define a 
marginal distribution that is independent of the nodes contained in the other sub-networks, i.e., 
independent of $n$. 
\end{proof}

Alternatively, Proposition~\ref{prop:mlnfrag} can also be proven by an application of 
Theorem 1 of~\cite{shalizi2013consistency}.
Our MLN example from Section~\ref{sec:background} does not satisfy the restriction of 
Proposition~\ref{prop:mlnfrag}. A somewhat synthetic example that satisfies the condition is

\begin{displaymath}
  \begin{array}{lr}
   \emph{red}(X)\wedge \emph{edge}(X,X)  & -1.5 \\
   \emph{edge}(X,Y)\wedge\emph{edge}(Y,X) & 0.8 \\ 
  \end{array}
\end{displaymath}

This MLN represents a model according to which red nodes are unlikely to have self-loops, and
the edge relation tends to be symmetric. It is an open question whether our projective MLN fragment
contains more natural and practically relevant classes of models.


\begin{proposition}
\label{prop:problogproj}
  A ProbLog model defines a projective RRSM, if all the background knowledge clauses satisfy 
the property that the body of the clause does not contain any variables that are not contained in the
head of the clause.
\end{proposition}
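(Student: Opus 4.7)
The plan is to exploit ProbLog's sampling semantics together with a locality property of derivation that is forced by the syntactic restriction. Recall that a ProbLog model defines $\Qn$ by independently sampling each ground labeled fact with its attached probability to obtain a total choice $c$, and then setting $\omega_c^{(n)}$ to be the unique minimal Herbrand model of $BK \cup c$ over domain $[n]$. Exchangeability is immediate since neither the labeled facts nor the background clauses mention domain constants. For projectivity it suffices to show that for every $m < n$ and every $\omega \in \Omegam$, the $\Qn$-mass of the set of extensions $\omega'\in\Omegan$ with $\omega'\!\downarrow\![m]=\omega$ equals $\Qm(\omega)$.

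The central claim is a locality property of the minimal model: for any total choice $c$ of labeled facts over $[n]$, the restriction $\omega_c^{(n)}\!\downarrow\![m]$ depends only on the sub-choice $c\!\downarrow\![m]$ consisting of those ground labeled facts whose argument tuples lie in $[m]$, and in fact $\omega_c^{(n)}\!\downarrow\![m] = \omega_{c\downarrow[m]}^{(m)}$. I would prove this by induction on derivation length, showing that a ground atom $r(\boldi)$ with $\boldi \in [m]^k$ is derivable from $BK \cup c$ over $[n]$ iff it is derivable from $BK \cup (c\!\downarrow\![m])$ over $[m]$. The ``if'' direction is trivial, since any derivation valid over $[m]$ remains valid over $[n]$. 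For ``only if'', consider the last BK clause applied in a derivation of $r(\boldi)$: by the hypothesis of the proposition, the variables of its body form a subset of those of its head, so any grounding substitution that maps the head to $r(\boldi)$ instantiates body variables only with constants from $\boldi \subseteq [m]$. Hence every ground body atom has its arguments in $[m]$, and the induction hypothesis applies to each.

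Granted this locality claim, projectivity follows by a standard independence argument. Partition the labeled-fact choice into a sub-choice $c_m$ over atoms with all arguments in $[m]$ and a complementary sub-choice $c_{\rm ext}$; since labeled facts are sampled independently, $p(c) = p(c_m)\,p(c_{\rm ext})$. Therefore
\begin{displaymath}
\Qn\!\downarrow\![m](\omega) \;=\; \sum_{c} p(c_m)\, p(c_{\rm ext})\, \indicator[\omega_c^{(n)}\!\downarrow\![m] = \omega] \;=\; \sum_{c_m : \omega_{c_m}^{(m)} = \omega} p(c_m) \sum_{c_{\rm ext}} p(c_{\rm ext}) \;=\; \Qm(\omega),
\end{displaymath}
using locality in the second step and the fact that the $c_{\rm ext}$-probabilities sum to one in the third.

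The main obstacle is making the locality induction fully rigorous: one must be sure that no derivation of an atom over $[m]$ passes through an intermediate ground atom with arguments outside $[m]$. The head-variables-include-body-variables condition prevents exactly this, since every recursive body atom inherits its constants from the head. Without the restriction, a clause such as $\emph{t}(X) \leftarrow \emph{s}(X,Y)$ would let derivations of $\emph{t}(i)$ with $i \in [m]$ depend on facts $\emph{s}(i,j)$ with $j \notin [m]$, breaking projectivity in exactly the way combination functions do for RBNs.
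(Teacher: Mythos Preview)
Your proposal is correct and follows essentially the same approach as the paper's proof sketch: both hinge on the observation that, under the syntactic restriction, any derivation of a ground atom $r(\boldi)$ involves only ground atoms whose arguments are contained in $\boldi$, so that the truth of atoms over $[m]$ depends only on the labeled facts over $[m]$, whose joint distribution is independent of $n$. Your version spells out the induction on derivation length and the independence decomposition of the total choice more explicitly, but the underlying argument is the same.
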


\begin{proof}[Sketch]
If the stated property holds, then a proof for a ground atom $r(\boldi)$ can only contain
ground atoms $r'(\boldi')$ with $\boldi'\subseteq \boldi$. The probability that $r(\boldi)$ is
provable when $\boldi\subset [m]$ then only depends on the probabilities of ground labeled facts
with arguments from $[m]$. The joint distribution of these ground facts is independent of $n$. 
\end{proof}

The ProbLog example of Section~\ref{sec:background} satisfies the condition of 
Proposition~\ref{prop:problogproj}. 

\subsection{Discussion}
Our conditions for the projective RBN and ProbLog structures are very
similar, and it seems that both
fragments support more or less the same types of models. 
The projectivity conditions are essentially limitations on probabilistic dependencies. For
the frameworks that (implicitly) encode a directed sampling process for possible worlds (RBN, ProbLog), 
the limit on the dependency is that when a ground atom $r(\boldi)$ is sampled, its distribution 
cannot depend on ground atoms containing elements other than included in $\boldi$. However, 
$r(\boldi)$, in turn, may influence the sampling probabilities of other atoms containing elements 
not in $\boldi$. As a consequence, both in the RBN and the ProbLog model of Section~\ref{sec:background},
the value of $\emph{red}(i)$ influences the probability for $\emph{edge}(i,j)$, and as a result,
the two variables $\emph{red}(i), \emph{edge}(i,j)$ are not independent. 

Due to the undirected nature of MLNs, one there cannot impose an ``upstream only'' limitation on
probabilistic dependencies. As a result, the restriction imposed in Proposition~\ref{prop:mlnfrag}
implies that a when a model defines two random variables  $\emph{red}(i), \emph{edge}(i,j)$ ($i\neq j$),
these  must be independent.

\section{Projectivity and Inference}
\label{sec:inference}

In an inference scenario, we are given an RRSM $\{\Qn\mid n\in \Nset\}$, a domain of size $n$, and
a query, which for the sake of concreteness we assume to be of the form
\begin{displaymath}
  P(r(\boldi)\mid (\neg)r_1(\boldi_1),\ldots,(\neg)r_h(\boldi_h)) = ?,
\end{displaymath}
where the $(\neg)r_j(\boldi_j)$ are observed evidence literals. Let $I:=\boldi\cup\cup_{j=1}^h \boldi_j$, 
and $m:=|I|$. 
To answer the query, all we need is the marginal distribution 
$\Qn\downarrow I$. If the model is projective, then this marginal is equivalent to $\Qm$, and 
independent of $n$. In practice this means, that when inference is performed by grounding the
relational model over a concrete domain, that we here only need to ground the model over the
domain that contains exactly the entities mentioned in the query. 

Apart from the clear computational advantages that projectivity affords, it also leads to 
robustness with regard to domains that are only  incompletely known or observed: in many cases
it can be difficult to specify exactly the size of the domain in which our observed entities $I$ ``live''.
(what is the domain of a person's social network?). Here projectivity means that our inferences
are not impacted by missing or imprecise information about the domain.

\section{Projectivity and Learning}
\label{sec:learning}

In a learning setting, we are interested in estimating the
parameter $\boldtheta$ for a given parametric family of RRSMs. We assume that
the training data consists of one or several observed possible worlds, in the latter 
case possibly worlds of varying sizes. Like~\cite{Xiang2011} and
\cite{kuzelka2018relational} we mostly focus on the scenario where the training data 
consists of a single possible world $\omega\in\Omegam$, and we estimate $\boldtheta$ by maximizing
the likelihood

\begin{equation}
  \label{eq:thetalik}
  L(\boldtheta|\omega) = \Qm_{\boldtheta}(\omega).
\end{equation}
For simplicity we here restrict attention to pure maximum likelihood inference, but our considerations
are also pertinent for penalized likelihood or Bayesian inference approaches.

For this learning problems, too, the dependence on the domain size $n$ is an important concern.
Consider, for example, the problem of learning a model for an evolving (social) network. 
Is the model we learn today, while the network is of size $m$, still a good model when the
network has evolved to size $n>m$? Moreover, the data from which we learn may not be  complete,
and not contain the data about all the entities that are actually present in the relational 
domain. Then we try to learn a model for a domain of size $n$, from data corresponding to 
a domain of smaller size $m<n$~\cite{kuzelka2018relational}.

The general question we want to address, therefore,  is the following: 
let $\boldtheta^*\in\Theta$ be the parameter estimate we obtain by maximizing
(\ref{eq:thetalik}) for some $\omega\in\Omegam$. Let 
$\omega'\in\Omegan$ such that $\omega'\downarrow I = \omega$ for 
some $I\subset [n]$ of size $m$, and $\boldtheta^{**}$ the estimate obtained 
by maximizing $\Qn_{\boldtheta}(\omega')$. What can we say about the relationship 
between $\boldtheta^*$ and $\boldtheta^{**}$?

To obtain more precise statements of this overall question, we first consider the scenario where we do
not know $\omega'$, but we do know $\omega'$'s size $n$. Then, given the partial observation
$\omega$, we can maximize likelihood  in the appropriate space 
$\{\Qn_{\boldtheta}| \boldtheta\in\Theta\}$ via the marginal likelihood function 
\begin{equation}
  \label{eq:thetafvlik}
  M\Ln(\boldtheta|\omega):=\Qn_{\boldtheta}( \{ \omega'\in\Omegan: \omega' \downarrow [m]=  \omega \}) 
\end{equation}

The following then is immediate:
\begin{proposition}
If the parametric family $\{\Qn_{\boldtheta}\mid \boldtheta\in\Theta, n\in \Nset\}$ is projective, then
the likelihood functions (\ref{eq:thetalik}) and (\ref{eq:thetafvlik}) are identical.
\end{proposition}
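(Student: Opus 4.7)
The plan is to show the equality by chasing definitions and then invoking projectivity in a single step. Fix an arbitrary $\boldtheta \in \Theta$ and the observed sub-world $\omega \in \Omegam$. The first move is to recognize that the set appearing on the right-hand side of (\ref{eq:thetafvlik}), namely $\{\omega' \in \Omegan : \omega' \downarrow [m] = \omega\}$, is precisely the preimage of $\omega$ under the marginalization map $\downarrow [m] : \Omegan \to \Omegam$. Consequently, by the very definition of $\Qn_{\boldtheta} \downarrow [m]$ as the pushforward of $\Qn_{\boldtheta}$ under this map, one has
\begin{equation*}
  M\Ln(\boldtheta|\omega) = \Qn_{\boldtheta}(\{\omega' \in \Omegan : \omega' \downarrow [m] = \omega\}) = (\Qn_{\boldtheta} \downarrow [m])(\omega).
\end{equation*}

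Next, I would apply Definition~\ref{def:projective}. Projectivity of the parametric family means that for this particular $\boldtheta$, the RRSM $\{\Qn_{\boldtheta} \mid n \in \Nset\}$ is projective in the sense of Definition~\ref{def:proj1}, which gives $\Qn_{\boldtheta} \downarrow [m] = Q^{(m)}_{\boldtheta}$ as distributions on $\Omegam$. Evaluating both sides at the point $\omega$ then yields $(\Qn_{\boldtheta} \downarrow [m])(\omega) = Q^{(m)}_{\boldtheta}(\omega) = L(\boldtheta|\omega)$. Chaining the two equalities gives $M\Ln(\boldtheta|\omega) = L(\boldtheta|\omega)$, and since $\boldtheta$ was arbitrary the two likelihood functions coincide.

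There is essentially no obstacle; the content of the proposition is almost entirely notational. The only point that deserves a sentence of care is the first identification, since $\downarrow [m]$ has been defined at the level of distributions (on truth-value assignments to ground atoms over $[m]$) rather than as an explicit pushforward along a map between sample spaces. Once one observes that a world in $\Omegam$ corresponds exactly to the truth assignment to all ground atoms with arguments in $[m]$, and that restricting $\omega' \in \Omegan$ to those atoms is the same as conditioning on the event $\omega' \downarrow [m] = \omega$, the identification is immediate and the rest of the argument is a single appeal to projectivity.
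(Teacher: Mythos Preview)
Your argument is correct and matches the paper's own treatment: the paper simply declares the proposition ``immediate'' without giving a proof, and your unpacking of the definitions---identifying $M\Ln(\boldtheta\mid\omega)$ with $(\Qn_{\boldtheta}\downarrow[m])(\omega)$ and then invoking projectivity---is exactly the one-line reasoning the authors had in mind.
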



Even in the context of a projective family, however, learning from $\omega$ that is a sub-sample 
of the ``true'' world $\omega'$ usually is 
problematic: the use of the marginal likelihood function
(\ref{eq:thetafvlik}) only is justified, when the data in $\omega$ is 
\emph{missing at random}~\cite{Rubin76}. This will be 
the case when $\omega$ is the induced sub-structure of  $m$  uniformly, randomly selected elements 
from the domain $[n]$ 
of $\omega'$ (\emph{induced subgraph sampling}~\cite[Chapter 5]{kolaczyk2009statistical}). However, a more
realistic scenario for observing a substructure of a possible world $\omega'$ is by mechanisms such as
\emph{traceroute} or \emph{snowball} sampling~\cite[Chapter 5]{kolaczyk2009statistical}). 
When $\omega$ is observed through such a mechanism, then
(\ref{eq:thetafvlik}) is not an appropriate likelihood function.

\subsection{Projectivity and Maximum Likelihood Learning}
In the following, we make the 
idealized assumption
that $\omega\in\Omegam$ was observed through induced subgraph sampling
from a larger world $\omega'\in\Omegan$.
In conjunction with projectivity,  the use of 
(\ref{eq:thetalik}) then is justified, giving us the parameter estimate $\theta^*$. 
Can we derive some guarantees for the quality of $\theta^*$ as an approximation
of the estimate $\theta^{**}$ one would obtain by maximizing 
$\Qn_{\boldtheta}(\omega')$?

Clearly, for any specific  pair $\omega,\omega'$,  no
guarantees can be given, because $\omega$ could be a very un-representative 
sub-structure of $\omega'$. We therefore can only ask whether such guarantees can
be given in expectation, where  expectation is
with respect to induced subgraph sampling of $\omega$ from $\omega'$. 

This question still has two 
distinct precise formalizations, expressed by the following two equalities:

\begin{eqnarray}
  E_{\omega}  [ \argmax_{\boldtheta} \emph{log}L^{(m)}(\boldtheta|\omega)  ] 
 =  \argmax_{\boldtheta} \emph{log}L^{(n)}(\boldtheta|\omega') & & 
\label{eq:unbiased1}\\
  \argmax_{\boldtheta} E_{\omega} [ \emph{log}L^{(m)}(\boldtheta|\omega)]  
 =  \argmax_{\boldtheta} \emph{log}L^{(n)}(\boldtheta|\omega') & &
\label{eq:unbiased2}
\end{eqnarray}

We here turn to the log-likelihood, because  the expectation in the left-hand side
expression of  (\ref{eq:unbiased2}) is more meaningful when applied to the log-likelihood.
For all other expressions in  (\ref{eq:unbiased1}) and (\ref{eq:unbiased2}) the application
of the \emph{log} makes no difference. In cases where the log-likelihood functions need 
not have a unique maximum, the $\argmax_{\boldtheta}$ should be read as returning
the set of all maxima.

Condition (\ref{eq:unbiased1}) basically says that estimating $\boldtheta$ from an $m$-element induced 
substructure of $\omega'$ is an \emph{unbiased} estimator for the estimate one would have obtained from 
$\omega'$. Condition (\ref{eq:unbiased2}) essentially expresses a statistical \emph{consistency} property: if one
takes repeated size $m$ samples $\omega_1,\ldots,\omega_N$, and maximizes the sample log-likelihood
$\frac{1}{N}\sum_{i=1}^N   \emph{log}\Lm(\boldtheta|\omega_i)$ then, for large $N$, this will become equivalent
to maximizing $\Ln(\boldtheta|\omega')$. We note though, that (\ref{eq:unbiased2}) alone does not
directly guarantee this consistency. Additional regularity conditions on the likelihood function will be 
needed. 
 
The following is a cautionary example that shows that even for projective models, neither 
(\ref{eq:unbiased1}) nor (\ref{eq:unbiased2}) need hold.

\begin{example}
Consider the following RBN:
  \begin{eqnarray}
    \emph{red}(X) & \leftarrow  & \theta \label{eq:rbncounterex1} \\
    \emph{edge}(X,Y) & \leftarrow  & \theta \label{eq:rbncounterex2}
  \end{eqnarray}
In this RBN there is a common parameter $\theta$ that denotes both the probability for the
attribute $a/1$, and the edge relation $\emph{edge}/2$.   This form of parameter sharing by different
probability formulas is supported by RBNs and their learning tools, even though it seems to be
of limited use in practice. This RBN clearly is projective. 

Now consider the possible world $\omega'\in\Omegan$ where $a(i)$ is true for $i=0,\ldots,n/2-1$, and
false for $i=n/2,\ldots,n-1$, and where $\emph{edge}(i,j)$ is false for all $i,j$. Let $m=2$. 
A random $\omega\in\Omega^{(2)}$ drawn from $\omega'$ then is with equal probability one of the 4 
worlds with $(r(0),r(1))$, $(r(0),\neg r(1))$, $(\neg r(0),r(1))$, or $(\neg r(0),\neg r(1))$ (and 
no edges). Writing $l\theta$ for $\emph{log}\theta$, and $l\bar{\theta}$ for 
$\emph{log}(1-\theta)$, we obtain as the expected log-likelihood:
\begin{multline}
  E_{\omega} [ \emph{log}L^{(2)}(\boldtheta|\omega)]= \\
 \frac{1}{4}
((4l\bar{\theta}+2l\theta)+ 2(4l\bar{\theta}+l\theta +l\bar{\theta})+(4l\bar{\theta}+2l\bar{\theta}))
\label{eq:lik1}
\end{multline}
where the  recurring first term $4l\bar{\theta}$ accounts for the absence of the four possible edges (including
possible self-loops) in all possible sampled worlds. Maximizing this gives
\begin{displaymath}
  \theta^*=\argmax_{\boldtheta} E_{\omega} [ \emph{log}L^{(2)}(\boldtheta|\omega)] = 1/6.
\end{displaymath}
The log-likelihood function induced by $\omega'$ is
\begin{equation}
\label{eq:lik2}
  \emph{log}L^{(n)}(\boldtheta|\omega') = n^2 l\bar{\theta} + (n/2)l\theta +  (n/2)\bar{\theta}  
\end{equation}
hich is maximized by
\begin{displaymath}
  \theta^{**}=\argmax_{\boldtheta} \emph{log} L^{(n)}(\boldtheta|\omega') = \frac{n/2}{n^2+n}=\frac{1}{2(n+1)}
\end{displaymath}
Thus, (\ref{eq:unbiased2}) does not hold. We also compute
\begin{displaymath}
   E_{\omega}  [ \argmax_{\boldtheta} L^{(2)}(\boldtheta|\omega)  ] =
   \frac{1}{4}(\frac{1}{3}+ \frac{1}{6}+ \frac{1}{6}+ 0  ) =\frac{1}{6},
\end{displaymath}
and see that (\ref{eq:unbiased1}) also fails (in general, other than in this example,  
the left-hand sides of  (\ref{eq:unbiased1})  and (\ref{eq:unbiased2}) need not be identical).  

Finally, now suppose that the two formulas (\ref{eq:rbncounterex1}) and  (\ref{eq:rbncounterex2}) each
are defined by distinct parameters $\theta_r$ and $\theta_e$, respectively. Then the likelihood function
(\ref{eq:lik1}) becomes
\begin{displaymath}
   \frac{1}{4}
((4l\bar{\theta}_e+2l\theta_r)+ 2(4l\bar{\theta}_e+l\theta_r +l\bar{\theta}_r)+(4l\bar{\theta}_e+2l\bar{\theta}_r)),
\end{displaymath}
and (\ref{eq:lik2}) turns into
\begin{displaymath}
   n^2 l\bar{\theta}_e + (n/2)l\theta_r +  (n/2)\bar{\theta}_r,  
\end{displaymath}
both of which are maximized by $\theta_e=0$ and $\theta_r=1/2$.
\end{example}

We now proceed to establish conditions under which (\ref{eq:unbiased2}) is guaranteed to hold.
We first
restrict attention to models where the sufficient statistics for 
$\Qn_{\boldtheta}(\omega)$ are given by induced substructure counts, defined next.

\begin{definition}
  Let $\omega\in\Omegan$ and $k\leq n$. Let $\tilde{\omega}\in\Omegak$. The 
\emph{ordered substructure count} 
of $\tilde{\omega}$ in $\omega$ is defined as 
\begin{displaymath}
  |\{\boldi: \omega\downarrow \boldi \equiv \tilde{\omega}   \}| =: C_{\tilde{\omega}}(\omega)
\end{displaymath}
where 
\begin{itemize}
\item $\boldi=(i_0,\ldots,i_{k-1})$ ranges over the $\frac{n!}{(n-k)!}$ 
tuples of $k$ distinct elements from $[n]$
\item $\omega\downarrow\boldi$ is the  sub-structure induced by $\boldi$ in $\omega$
\item $ \equiv $ stands for isomorphism under the mapping $i_j \mapsto j$
\end{itemize}
\end{definition}

We refer to this as ordered substructure count, because it corresponds to taking an ordered sample
of $k$ elements from $\omega$, and matching the induced substructure against $\tilde{\omega}$ under
the unique mapping $i_j \mapsto j$ defined by the order of $\boldi$. 
The ordered substructure counts are closely related to the ``Model B'' of~\cite{kuzelka2018relational} for
defining the probability of a formula, and \emph{homomorphism densities} 
for the 
convergence analysis of graph sequences~\cite{borgsa2008convergent}.  

Collecting all ordered substructure counts for worlds $\tilde{\omega}$ up to size $k$, we obtain:

\begin{definition}
  The \emph{complete $k$-count statistics} of $\omega$ is the set
\begin{displaymath}
  C_k(\omega):=\cup_{1\leq l\leq k}\{ C_{\tilde{\omega}}(\omega)\mid \tilde{\omega} \in\Omega^{(l)} \}
\end{displaymath}
\end{definition}

Clearly, there is redundancy in the complete $k$-count statistics, since the substructure counts 
for worlds of size $l<k$ can be derived from the substructure counts for the worlds of 
size $k$. However, it is convenient to also directly include the counts for smaller substructures
explicitly in our statistics. The following definition describes a special case of statistical
sufficiency, which is suitable in our context.

\begin{definition}
  Let $k\geq 1$. A parametric family of RRSMs is \emph{determined by $k$-count statistics},
if for all $n$, $\Qn_{\boldtheta}(\omega)$ depends on $\omega$ only as a function of
$C_k(\omega)$. 
\end{definition}

It is 
straightforward that $k$-count statistics are sufficient for MLNs that contain
at most $k$ variables in each formula. For RBNs,  $k$-count statistics are not sufficient 
in general, but they are sufficient for the restricted class of projective RBNs identified in 
Section~\ref{sec:fragments}. For ProbLog the situation is a bit more involved, as discussed 
in Example~\ref{ex:consistency-problog} below.

As a first step to exploit determination by $k$-count statistics to ensure (\ref{eq:unbiased2}), we 
note that induced substructure sampling provides an unbiased estimator for the
ordered substructure counts of $\omega'$ (normalized to substructure \emph{frequencies}):

\begin{lemma}
\label{lem:suffstats}
  Let $\omega'\in\Omegan$, $k\leq m\leq n$, and $\tilde{\omega}\in\Omegak$. Then
  \begin{displaymath}
    E_{\omega}[\frac{(m-k)!}{m!}  C_{\tilde{\omega}}(\omega)]=\frac{(n-k)!}{n!}C_{\tilde{\omega}}(\omega'),
  \end{displaymath}
where the expectation is with respect to induced substructure sampling from $\omega'$ of 
worlds $\omega\in\Omegam$.
\end{lemma}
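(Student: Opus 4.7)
The plan is to unfold the definition of the ordered substructure count, apply linearity of expectation, and reduce to a classical hypergeometric probability for uniform subset sampling.

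First, I would write
\begin{displaymath}
C_{\tilde{\omega}}(\omega) = \sum_{\boldi} \indicator[\omega \downarrow \boldi \equiv \tilde{\omega}],
\end{displaymath}
with $\boldi$ ranging over ordered $k$-tuples of distinct elements of $[m]$. Under induced substructure sampling, $\omega$ arises by choosing a uniformly random $m$-subset $S \subseteq [n]$ and taking $\omega$ to be the image of $\omega' \downarrow S$ under the canonical order-preserving relabeling $\pi_S\colon S \to [m]$. The key observation is that for any $\boldi$ the event $\omega \downarrow \boldi \equiv \tilde{\omega}$ (isomorphism under $i_j \mapsto j$) coincides with the event $\omega' \downarrow \boldj \equiv \tilde{\omega}$, where $\boldj := (\pi_S^{-1}(i_0),\ldots,\pi_S^{-1}(i_{k-1}))$, since the relabeling is itself a domain isomorphism.

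Next, I would swap the order of expectation and summation. For each fixed $S$, as $\boldi$ ranges over ordered $k$-tuples of distinct elements of $[m]$, the associated tuple $\boldj$ ranges bijectively over ordered $k$-tuples of distinct elements of $S$. Reindexing the outer sum over all ordered $k$-tuples $\boldj$ of distinct elements of $[n]$ and inserting the indicator $\indicator[\boldj\subseteq S]$ gives
\begin{displaymath}
E_{\omega}[C_{\tilde{\omega}}(\omega)] = \sum_{\boldj} \indicator[\omega' \downarrow \boldj \equiv \tilde{\omega}] \cdot P_S[\boldj \subseteq S],
\end{displaymath}
where $P_S$ denotes probability over the random choice of $S$. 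A uniformly random $m$-subset of $[n]$ contains a fixed set of $k$ distinct elements with probability $\binom{n-k}{m-k}/\binom{n}{m} = m!\,(n-k)!/(n!\,(m-k)!)$, independent of $\boldj$. Pulling this constant out of the sum leaves exactly $C_{\tilde{\omega}}(\omega')$, and the claim follows after multiplying by $(m-k)!/m!$.

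The main obstacle is purely bookkeeping: one must check that the isomorphism "$\equiv$ under $i_j \mapsto j$" transfers correctly under the relabeling $\pi_S$, and that the reindexing from tuples in $[m]^k$ to tuples in $[n]^k$ (weighted by the containment indicator) is bijective in the appropriate sense. Once these are in place, the remainder is a textbook hypergeometric calculation. Notably, no exchangeability or structural assumption on $\omega'$ enters the argument, so the equality holds pointwise for every $\omega'\in\Omegan$, not merely in distribution.
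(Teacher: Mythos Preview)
Your argument is correct. The paper itself does not supply a proof of this lemma; it is stated as an observation (``induced substructure sampling provides an unbiased estimator for the ordered substructure counts'') and then immediately used in the proof of Proposition~\ref{prop:linseplik}. Your write-up therefore fills in exactly the details the paper omits, via the natural route: linearity of expectation applied to the indicator decomposition of $C_{\tilde{\omega}}$, followed by the hypergeometric containment probability $\binom{n-k}{m-k}/\binom{n}{m}$.

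One remark on the bookkeeping you flag: the choice of relabeling $\pi_S$ is in fact immaterial, since $C_{\tilde{\omega}}(\cdot)$ is invariant under isomorphism of its argument (the sum over all ordered $k$-tuples absorbs any permutation of $[m]$). So you need not commit to the order-preserving bijection specifically; any bijection $S\to[m]$ yields the same count, which is why the paper can leave the sampling mechanism underspecified at this level of detail.
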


Next, we consider a particular form of the likelihood function determined
by $k$-count statistics:

\begin{definition}
\label{def:linsep}
  We say that the log-likelihood function is \emph{linear and separable} in the $k$-count statistics, if
for $\omega\in\Omegam$:
  \begin{equation}
  \label{eq:linsep}
    \emph{log}L^{(m)}(\boldtheta\mid \omega) =
    \sum_{l=1}^k c(m,l) \sum_{\tilde{\omega}\in\Omega^{(l)}}  
    C_{\tilde{\omega}}(\omega)\cdot f_{\tilde{\omega}}(\boldtheta[\tilde{\omega}])
  \end{equation}
where
\begin{itemize}
\item $ c(m,l)$ is a constant depending only on $m$ and $l$
\item $f_{\tilde{\omega}}(\boldtheta[\tilde{\omega}])$ is a function of a subset of 
parameters $\boldtheta[\tilde{\omega}]\subseteq \boldtheta$, such that the following \emph{separability}
property holds: if $\tilde{\omega}\in\Omega^{(l)}$, $\tilde{\omega}'\in\Omega^{(l')}$ with
$l\neq l'$, then $\boldtheta[\tilde{\omega}] \cap  \boldtheta[\tilde{\omega}']=\emptyset$.
\end{itemize}
\end{definition}

\begin{proposition}
\label{prop:linseplik}
  If the likelihood function is linear and separable in the $k$-count
statistics, then (\ref{eq:unbiased2}) holds.
\end{proposition}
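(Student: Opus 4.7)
The plan is to push the expectation in (\ref{eq:unbiased2}) through the linear structure of (\ref{eq:linsep}) using Lemma \ref{lem:suffstats}, and then exploit separability so that the $\argmax$ decomposes into one independent optimization per level $l$. First I would apply linearity of expectation: since only the counts $C_{\tilde\omega}(\omega)$ are random given $\omega'$,
\[ E_\omega[\log L^{(m)}(\boldtheta\mid\omega)] = \sum_{l=1}^k c(m,l) \sum_{\tilde\omega\in\Omega^{(l)}} E_\omega[C_{\tilde\omega}(\omega)] \, f_{\tilde\omega}(\boldtheta[\tilde\omega]). \]
Lemma \ref{lem:suffstats}, applied with $l$ in place of $k$, replaces each $E_\omega[C_{\tilde\omega}(\omega)]$ by $\rho(m,n,l)\, C_{\tilde\omega}(\omega')$, where $\rho(m,n,l) = m!(n-l)!/((m-l)!\,n!) > 0$ is a constant independent of $\tilde\omega$ and $\boldtheta$ that I can pull out of the inner sum.

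Setting $G_l(\boldtheta) := \sum_{\tilde\omega\in\Omega^{(l)}} C_{\tilde\omega}(\omega') \, f_{\tilde\omega}(\boldtheta[\tilde\omega])$, I then have $E_\omega[\log L^{(m)}(\boldtheta\mid\omega)] = \sum_{l=1}^k c(m,l)\rho(m,n,l)\, G_l(\boldtheta)$ while, by (\ref{eq:linsep}) directly, $\log L^{(n)}(\boldtheta\mid\omega') = \sum_{l=1}^k c(n,l)\, G_l(\boldtheta)$. The key structural observation is the separability clause in Definition \ref{def:linsep}: the parameter block $\boldtheta_l := \bigcup_{\tilde\omega\in\Omega^{(l)}} \boldtheta[\tilde\omega]$ is disjoint from $\boldtheta_{l'}$ for $l\neq l'$, so each $G_l$ depends only on $\boldtheta_l$. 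Maximizing a sum of functions of disjoint coordinate blocks reduces to maximizing each block separately; for block $l$, the two competing objectives become $c(m,l)\rho(m,n,l)\, G_l(\boldtheta_l)$ and $c(n,l)\, G_l(\boldtheta_l)$, differing only by a multiplicative scalar. Provided the two scalars carry a common sign, the argmaxes in $\boldtheta_l$ coincide, and concatenating per-block maximizers yields (\ref{eq:unbiased2}).

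The delicate bookkeeping point, and the place I expect the main obstacle, is justifying that $c(m,l)\rho(m,n,l)$ and $c(n,l)$ indeed share a sign. In the intended use case, (\ref{eq:linsep}) comes from an actual log-likelihood whose coefficients arise from a common combinatorial normalization (typically positive counts divided by a permutation factor), so both scalars are positive and the rescaling is harmless; this should be recorded as a mild regularity hypothesis if not already implicit in Definition \ref{def:linsep}. If the maximizer is not unique, the argument still goes through with $\argmax$ read set-wise, because the Cartesian product of the per-block argmax sets equals the argmax of the separable sum.
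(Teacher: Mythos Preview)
Your proposal is correct and follows essentially the same route as the paper: push the expectation through the linear form via Lemma~\ref{lem:suffstats}, then use separability to split the $\argmax$ into per-level problems that differ from the target objective only by a positive multiplicative constant. The paper simply reverses the order (it decomposes by separability first, then applies the lemma), and your explicit remark about the sign of $c(m,l)\rho(m,n,l)$ versus $c(n,l)$ is a point the paper leaves implicit.
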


\begin{proof}
 Due to the separability condition, the maximization of $\emph{log}L^{(m)}$ can be divided into $k$ 
independent maximization problems of the functions
\begin{multline*}
\emph{log}L^{(m)}_l(\boldtheta\mid \omega) :=
  c(n,l) \sum_{\tilde{\omega}\in\Omega^{(l)}}  
    C_{\tilde{\omega}}(\omega)\cdot f_{\tilde{\omega}}(\boldtheta[\tilde{\omega}])\hspace{5mm}\\
    (l=1,\ldots,k),
\end{multline*}
for disjoint sets of parameters $\boldtheta[l]:=\cup_{\tilde{\omega}\in\Omega^{(l)}}\boldtheta[\tilde{\omega}]$. 

Then with Lemma~\ref{lem:suffstats}:
\begin{multline}
  E_{\omega}[\emph{log}L^{(m)}_l(\boldtheta\mid \omega)]= \\
   c(m,l)\frac{m!}{(m-l)!}  \sum_{\tilde{\omega}\in\Omega^{(l)}}  E_{\omega}[\frac{(m-l)!}{m!} C_{\tilde{\omega}}(\omega)]\cdot f_{\tilde{\omega}}(\boldtheta[\tilde{\omega}])=\\
 c(m,l)\frac{m!}{(m-l)!}  \sum_{\tilde{\omega}\in\Omega^{(l)}}  \frac{(n-l)!}{n!} C_{\tilde{\omega}}(\omega)\cdot f_{\tilde{\omega}}(\boldtheta[\tilde{\omega}])=\\
 \frac{c(m,l)}{c(n,l)}\frac{m! (n-l)!}{(m-l)! n!} \emph{log}L^{(n)}_l(\boldtheta\mid \omega')],
\end{multline}
from which  (\ref{eq:unbiased2}) follows.
\end{proof}

\subsection{Directed Model Examples}
For ProbLog and RBNs we have to consider two different scenarios: the first scenario is that all 
relations that appear in the model are also observed in the possible worlds $\omega$ that constitute our training data. Under complete observability, (\ref{eq:unbiased2})  holds in both models as follows. 

\begin{example}
\label{ex:consistency-rbn}
  Using Proposition~\ref{prop:linseplik} we can show that (\ref{eq:unbiased2}) holds for 
RBNs without combining functions, under the following two additional conditions: 
\begin{description}
\item[(i)] $m \geq k$, where $k$ is  the largest arity
of relations in the underlying signature $S$;
\item[(ii)] the RBN does not contain 
multiple occurrences of the same parameter.
\end{description}
Condition (i) in conjunction with the restriction to the projective fragment 
ensures that the model is determined by $k$-count statistics. Condition (ii) ensures the 
separability property. 
\end{example}

Related to the previous example is Schulte's \citeyear{Schulte2011} pseudo-likelihood function for 
first-order Bayesian networks, which also satisfies the conditions of Definition~\ref{def:linsep}.

\begin{example}
\label{ex:consistency-problog}
Under complete observability,
parameter learning for ProbLog becomes trivial, since
the maximum likelihood parameters for the labeled facts are just obtained as the empirical 
frequencies with which groundings of the relations in the labeled facts are true. In particular, then 
the likelihood function is determined by $k$-count statistics, with $k$ the maximal arity of any
relation in labeled facts, and (\ref{eq:unbiased2})
trivially holds, even without the restriction to projective ProbLog.

However, more realistically, the ProbLog model will contain also some synthetic, unobserved 
relations, as shown in (\ref{eq:problog3}) and (\ref{eq:problog4}).  
In that case, the likelihood function induced by a possible world 
$\omega$ for the observable relations becomes a sum of products of parameters, which does not 
decompose as (\ref{eq:linsep}).
\end{example}

\section{Conclusion} The domain-size dependence of marginal probabilities is a  property that causes difficulties for both inference and learning. This dependence has been examined in statistical theory using the concept of a projective family of distributions, whose inferences do not depend on domain size. This paper considered whether common SRL models define projective families. 
The paper gives sufficient conditions on the model structures for three different types of SRL frameworks that ensure projectivity. The conditions are quite restrictive, which is evidence that projectivity is difficult to achieve in SRL models. For learning, we examined conditions under which maximum likelihood parameter estimation is valid when it is applied to an observed sub-network drawn from a larger network of known size. 

We believe that the projectivity of SRL models is an important and fruitful topic for future research. Open questions include the following. (1) Are there examples of domains for which our MLN projectivity condition is natural (all atoms in a formula share the same first-order variables)? (2) For RBNs, are there conditions on combining rules (rather than model structure) that ensure projectivity? For example, under which conditions does the combining rule \emph{average} define projective models? (3) Do our learning results for maximum likelihood learning carry over to learning with pseudo-likelihood functions? (4) Are suitable fragments of relational dependency networks projective \cite{Neville2007}?

\bibliographystyle{aaai}
\bibliography{mplan}

\end{document}